\theoremstyle{plain}
\newtheorem{lemma}{Lemma}
\theoremstyle{definition}
\theoremstyle{remark}
\newcommand\defeq{\stackrel{\mathclap{\small\mbox{def}}}{=}}
\newcommand{\reals}{\mathbb{R}}
\newcommand{\calT}{\mathcal{T}}
\renewcommand \vec [1]{\bm{#1}}
\renewcommand \bar [1]{\overline{\bm{#1}}}
\newcommand \vmu {\vec{\mu}}
\newcommand \vv {\vec{v}}
\newcommand \vw {\vec{w}}
\newcommand \mean {\mathrm{mean}}
\newcommand\blfootnote[1]{%
  \begingroup
  \renewcommand\thefootnote{}\footnote{#1}%
  \addtocounter{footnote}{-1}%
  \endgroup
}
\begin{document}
\title{What are the biases in my word embedding?}
\author{ 
Nathaniel Swinger$^*$\\
Lexington High School
\and
Maria De-Arteaga$^*$\\
Carnegie Mellon University
\and
Neil Thomas Heffernan IV\\
Shrewsbury High School
\and
Mark DM Leiserson\\
University of Maryland
\and
Adam Tauman Kalai\\
Microsoft Research
}
\date{Jan 28, 2019}
\maketitle

\setlength\fboxrule{1pt}
\noindent\fbox{\begin{minipage}{\dimexpr\textwidth-3\fboxsep-2\fboxrule\relax}
{\bf Warning: This paper includes stereotypes and terms which are offensive in nature. It is important, however, for researchers to be aware of the biases contained in word embeddings. 
}
\end{minipage}}
\begin{abstract}


This paper presents an algorithm for enumerating biases in word embeddings. The algorithm exposes a large number of offensive associations related to sensitive features such as race and gender on publicly available embeddings, including a supposedly ``debiased'' embedding. These biases are concerning in light of the widespread use of word embeddings. The associations are identified by geometric patterns in word embeddings that run parallel between people's names and common lower-case tokens. The algorithm is highly unsupervised: it does not even require the sensitive features to be pre-specified. This is desirable because: (a) many forms of discrimination--such as racial discrimination--are linked to social constructs that may vary depending on the context, rather than to categories with fixed definitions; and (b) it makes it easier to identify biases against {\em intersectional} groups, which depend on combinations of sensitive features. The inputs to our algorithm are a list of target tokens, e.g. names, and a word embedding. It outputs a number of Word Embedding Association Tests (WEATs) that capture various biases present in the data. We illustrate the utility of our approach on publicly available word embeddings and lists of names, and evaluate its output using crowdsourcing. We also show how removing names may not remove potential proxy bias.

\end{abstract}

\section{Introduction}
\blfootnote{$^*$ Indicates equal contribution.}

Bias in data representation is an important element of fairness in Artificially Intelligent systems \citep{barocas2017representational, Caliskan17:Semantics, zemel2013fairrepresentations, Dwork:2012}. We consider the problem of {\em Unsupervised Bias Enumeration} (UBE): discovering biases automatically from an unlabeled data representation. There are multiple reasons why such an algorithm is useful. First, social scientists can use it as a tool to study human bias, as data analysis is increasingly common in social studies of human biases \citep{garg2018word, kozlowski2018geometry}. Second, finding bias is a natural step in ``debiasing'' representations \citep{Bolukbasi16:Man}. Finally, it can help in avoiding systems that perpetuate these biases: problematic biases can raise red flags for engineers, who can choose to not use a representation or watch out for certain biases in downstream applications, while little or no bias can be a useful green light indicating that a representation is usable. While deciding which biases are problematic is ultimately application specific, UBE may be useful in a ``fair ML" pipeline.

We design a UBE algorithm for word embeddings, which are commonly used representations of tokens (e.g. words and phrases) that have been found to contain harmful bias~\citep{Bolukbasi16:Man}. Researchers linking these biases to human biases proposed the Word Embedding Association Test (WEAT)~\citep{Caliskan17:Semantics}. The WEAT draws its inspiration from the Implicit Association Test (IAT), a widely-used approach to measure human bias~\citep{greenwald1998measuring}. An IAT $\calT=(X_1,A_1,X_2,A_2)$ compares two sets of {\em target tokens} $X_1$ and $X_2$, such as female vs.\ male names, and a pair of opposing sets of {\em attribute tokens} $A_1$ and $A_2$, such as workplace vs.\ family-themed words. Average differences in a person's response times when asked to link tokens that have anti-stereotypical vs.\ stereotypical relationships have been shown to indicate the strength of association between concepts. Analogously, the WEAT 
uses vector similarity across pairs of tokens in the sets to measure association strength. As in the case of the IAT, the inputs for a WEAT are sets of tokens $\calT$ predefined by researchers.

Our UBE algorithm takes as input a word embedding and a list of target tokens, and {\em outputs} numerous tests $\calT_1, \calT_2,\ldots,$ that are found to be statistically significant by a method we introduce for bounding false discovery rates. A crowdsourcing study of tests generated on three publicly-available word embeddings and a list of names from the Social Security Administration confirms that the biases enumerated are largely consistent with human stereotypes. The generated tests capture racial, gender, religious, and age biases, among others. Table \ref{table:offensive} shows the name/word associations output by our algorithm that were rated most offensive by crowd workers. 

Creating such tests automatically has several advantages. First, it is not feasible to manually author all possible tests of interest. Domain experts normally create such tests, and it is unreasonable to expect them to cover all possible groups, especially if they do not know which groups are represented in their data. For example, a domain expert based on the United States may not think of testing for caste discrimination, hence biases that an embedding may have against certain Indian last names may go unnoticed. Finally, if a word embedding reveals no biases, this is evidence for lack of bias. We test this by running our UBE algorithm on the supposedly debiased embedding of \cite{Bolukbasi16:Man}.

Our approach for UBE leverages two geometric properties of word embeddings, which we call the \textit{parallel} and \textit{cluster} properties. The well-known parallel property indicates that differences between two similar token pairs, such as Mary$-$John and Queen$-$King, are often nearly parallel vectors. This suggests that among tokens in a similar topic or category, those parallel to name differences may represent biases, as was found by \cite{Bolukbasi16:Man} and \cite{Caliskan17:Semantics}. 
The cluster property, which we were previously unaware of, indicates that the (normalized) vectors of names and words cluster into semantically meaningful groups. For names, the clusters capture social structures such as gender, religion, and others. For words, clusters of words include word categories on topics such as food, education, occupations, and sports. We use these properties to design a UBE algorithm that outputs WEATs.

\begin{table*}
    \centering
\setlength\tabcolsep{4.22pt}

{
\begin{threeparttable}
\begin{tabular}{|lll|lll|lll|}
\multicolumn{3}{c}{Word2Vec trained on Google news} & \multicolumn{3}{c}{fastText trained on the Web} & \multicolumn{3}{c}{GloVe trained on the Web}\\
\hline
\textbf{w2v F8} & \textbf{w2v F11} & \textbf{w2v F6} & \textbf{fast F10} & \textbf{fast F7} & \textbf{fast F5} & \textbf{glove F8} & \textbf{glove F7} & \textbf{glove F5}\\
\hline
illegal immigrant & aggravated robbery & subcontinent & n***** & jihad & s****** & turban & cartel & pornstar\\
drug trafficking & aggravated assault & tribesmen & f***** & militants & maid & saree & undocumented & hottie\\
deported & felonious assault & miscreants & dreads & caliphate & busty & hijab & culpable & nubile\\
\hline
\end{tabular}
\end{threeparttable}}
\caption{Terms associated with name groups (see Tables \ref{table:fullw2v_names} and \ref{table:names_other_embeddings} for name groups \textbf{w2v F8}, etc.)  generated from three popular pre-trained word embeddings that were rated by crowd workers as both most offensive and aligned with societal biases. These associations do {\em not} reflect the personal beliefs of the crowd workers or authors of this paper. See Appendix \ref{ap:bleep} for a discussion of the bleep-censored words.
}
    \label{table:offensive}

\end{table*}

Technical challenges arise around any procedure for enumerating biases. First, the combinatorial explosion of comparisons among multiple groups parallels issues in human IAT studies as aptly described by \cite{bluemke2008reliability}:
``The evaluation of
multiple target concepts such as social groups within a multi-ethnic nation \cite[e.g. White vs. Asian Americans, White vs.
African Americans, African vs. Asian Americans;][]{devos2005american}
requires numerous pairwise comparisons for a
complete picture''. We alleviate this problem, paralleling that work on human IATs, by generalizing the WEAT to $n$ groups for arbitrary $n$. The second problem, for any UBE algorithm, is determining statistical significance to account for multiple hypothesis testing. To do this, we introduce a novel rotational null hypothesis specific to word embeddings. Third, we provide a human evaluation of the biases, contending with the difficulty that many people are unfamiliar with some groups of names. 



Beyond word embeddings and IATs, related work in other subjects is worth mention. First, a body of work studies fairness properties of classification and regression algorithms \citep[e.g.][]{Dwork:2012, kearns2017preventing}. While our work does not concern supervised learning, it is within this work that we find one of our main motivations--the importance of accounting for intersectionality when studying algorithmic biases. In particular,  \citet{buolamwini2018gender} demonstrate accuracy disparities in image classification highlighting the fact that the magnitude of biases against an intersectional group may go unnoticed when only evaluating for each protected feature independently. Finally, while a significant portion of the empirical research on algorithmic fairness has focused on the societal biases that are most pressing in the countries where the majority of researchers currently conducting the work are based, the literature also contains examples of biases that may be of particular importance in other parts of the world~\citep{shankar2017no,hoque2017evaluating}. UBE can aspire to be useful in multiple contexts, and enable the discovery of biases in a way that relies less on enumeration by domain experts. 

\section{Definitions}\label{sec:notation}
A $d$-dimensional word embedding consists of a set of tokens $\mathcal{W}$ with a nonzero vector $\vw \in \mathbb{R}^d$  associated with each token $w \in \mathcal{W}$. Vectors are displayed in boldface. As is standard, we refer to the {\em  similarity} between tokens $v$ and $w$ by the cosine of their vector angle, $\cos(\vv,\vw)$.  We write  $\bar{v}=\vv/|\vv|$ to be the vector normalized to unit-length associated with any vector $\vv\in \mathbb{R}^d$ (or 0 if $\vv=0$). This enables us to conveniently write the similarity between tokens $v$ and $w$ as an inner product, $\cos(\vv,\vw)=\bar{\vv}\cdot \bar{\vw}$. For token set $S$, we write $\bar{S}=\sum_{v \in S} \bar{\vv}/|S|$ so that $\bar{S}\cdot \bar{T}=\mean_{v \in S, w \in T} \bar{\vv} \cdot \bar{\vw}$ is the mean similarity between pairs of tokens in sets $S,T$. We denote the set difference between $S$ and $T$ by $S \setminus T$, and we denote the first $n$ whole numbers by $[n]=\{1,2,\ldots,n\}$. 

\subsection{Generalizing Word Embedding Association Tests}\label{sec:SWEAT}

We assume that there is a given set of possible targets $\mathcal{X}$ and attributes $\mathcal{A}$.
Henceforth, since in our evaluation all targets are names and all attributes are lower-case words (or phrases), we refer to targets as names and attributes as words. Nonetheless, in principle, the algorithm can be run on any sets of target and attribute tokens. \cite{Caliskan17:Semantics} define a WEAT statistic for two equal-sized groups of names $X_1, X_2\subseteq \mathcal{X}$ and words $A_1, A_2\subseteq{\mathcal{A}}$ which can be  conveniently written in our notation as,
$$s(X_1, A_1, X_2, A_2) ~\defeq~ \left(\sum_{x \in X_1}\bar{x}-\sum_{x \in X_2}\bar{x}\right) \cdot (\bar{A}_1 -\bar{A}_2).$$

In studies of human biases, the combinatorial explosion in groups can be avoided by teasing apart {\em Single-Category} IATs which assess associations one group at a time \citep[e.g.][]{karpinski2006single, penke2006single, bluemke2008reliability}. In word embeddings, we define a simple generalization for $n \geq 1$, nonempty groups $X_1, \ldots, X_n$  of arbitrary sizes and words $A_1, \ldots, A_n$, as follows:
\begin{align*}
g(X_1, A_1, \ldots, X_n, A_n) ~&\defeq ~~ \sum_{i=1}^n (\bar{X}_i - \vmu) \cdot (\bar{A}_i-\bar{\mathcal{A}})\\   ~~~\text{where }\vmu ~&\defeq ~~\begin{cases}\bar{\mathcal{X}}&\text{for }n=1,\\
\sum_{i} \bar{X}_i/n&\text{for }n\geq 2.\end{cases}\end{align*}
Note that $g$ is symmetric with respect to ordering and weights groups equally regardless of size. The definition differs for $n=1$, otherwise $g\equiv 0$. 

The following three properties motivate this as a ``natural'' generalization of WEAT to one or more groups.
\begin{lemma}\label{lem:decomp}
For any $X_1, X_2\subseteq \mathcal{X}$ of equal sizes $|X_1|=|X_2|$ and any nonempty $A_1, A_2 \subseteq\mathcal{A}$,
\begin{equation*}
s(X_1,A_1,X_2,A_2)=2|X_1|~g(X_1,A_1,X_2,A_2)
\end{equation*}
\end{lemma}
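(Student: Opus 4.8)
The plan is to prove the identity by direct computation: specialize the definition of $g$ to $n=2$, simplify it using the equal-size hypothesis, and recognize the result as a scalar multiple of $s$. First I would unfold $\vmu$ in the $n=2$ case, namely $\vmu = (\bar{X}_1 + \bar{X}_2)/2$, and record the two consequences $\bar{X}_1 - \vmu = \tfrac12(\bar{X}_1 - \bar{X}_2)$ and $\bar{X}_2 - \vmu = -\tfrac12(\bar{X}_1 - \bar{X}_2)$. Substituting these into $g(X_1,A_1,X_2,A_2) = (\bar{X}_1 - \vmu)\cdot(\bar{A}_1 - \bar{\mathcal{A}}) + (\bar{X}_2 - \vmu)\cdot(\bar{A}_2 - \bar{\mathcal{A}})$ and factoring out the common vector $\tfrac12(\bar{X}_1 - \bar{X}_2)$ by bilinearity of the inner product, the two copies of $\bar{\mathcal{A}}$ cancel, leaving $g(X_1,A_1,X_2,A_2) = \tfrac12(\bar{X}_1 - \bar{X}_2)\cdot(\bar{A}_1 - \bar{A}_2)$.

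Next I would pass from mean vectors to sums. Setting $m = |X_1| = |X_2|$, the definition $\bar{X}_i = \tfrac1m\sum_{x\in X_i}\bar{x}$ gives $\bar{X}_1 - \bar{X}_2 = \tfrac1m\bigl(\sum_{x\in X_1}\bar{x} - \sum_{x\in X_2}\bar{x}\bigr)$. This is the only place the equal-size assumption enters, and it is precisely what produces the factor $|X_1|$ relating the two statistics. Plugging it in yields $g(X_1,A_1,X_2,A_2) = \tfrac1{2m}\bigl(\sum_{x\in X_1}\bar{x} - \sum_{x\in X_2}\bar{x}\bigr)\cdot(\bar{A}_1 - \bar{A}_2) = \tfrac1{2|X_1|}\,s(X_1,A_1,X_2,A_2)$, and rearranging gives the claimed equality.

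There is no genuine obstacle here; the argument is a short algebraic manipulation, and the main thing to get right is the bookkeeping. Two points deserve a moment of care: (i) the term $\bar{\mathcal{A}}$ must disappear, which it does because it is dotted against $(\bar{X}_1 - \vmu) + (\bar{X}_2 - \vmu) = \vec{0}$; and (ii) the normalization constant, so that the factor $|X_1|$ is correctly traced back to the difference between the sum-based statistic $s$ and the mean-based statistic $g$. If one wished to avoid assuming $|X_1| = |X_2|$, the same computation shows $g = \tfrac12\bigl(\tfrac1{|X_1|}\sum_{x\in X_1}\bar{x} - \tfrac1{|X_2|}\sum_{x\in X_2}\bar{x}\bigr)\cdot(\bar{A}_1 - \bar{A}_2)$, which already explains why the clean proportionality to $s$ requires equal group sizes.
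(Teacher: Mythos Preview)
Your argument is correct and follows essentially the same route as the paper's own proof: specialize $\vmu=(\bar{X}_1+\bar{X}_2)/2$, observe $\bar{X}_i-\vmu=\pm\tfrac12(\bar{X}_1-\bar{X}_2)$, simplify $g$ to $\tfrac12(\bar{X}_1-\bar{X}_2)\cdot(\bar{A}_1-\bar{A}_2)$, and then use the equal-size assumption to pass from means to sums and recover $s$. The only difference is the order of presentation (the paper first writes $s$ in terms of $(\bar{X}_1-\bar{X}_2)\cdot(\bar{A}_1-\bar{A}_2)$ and then computes $g$), which is immaterial.
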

\begin{lemma}\label{lem:decomp2} For any nonempty sets $X \subset \mathcal{X}$, $A\subset\mathcal{A}$, let their complements sets $X^c=\mathcal{X}\setminus X$ and $A^c=\mathcal{A}\setminus A$. Then,
\begin{equation*}
g(X, A) = 2 g(X,A,\mathcal{X},\mathcal{A})= 2\frac{|X^c|}{|\mathcal{X}|} \frac{|A^c|}{|\mathcal{A}|} g(X,A,X^c, A^c)\end{equation*}
\end{lemma}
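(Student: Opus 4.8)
The plan is to unfold the definition of $g$ in each of its three appearances, use that the centering term $\bar{\mathcal{A}}-\bar{\mathcal{A}}$ vanishes whenever an attribute set is all of $\mathcal{A}$, and then reconcile the normalization constants via the observation that $\bar{\mathcal{X}}$ (resp.\ $\bar{\mathcal{A}}$) is the size-weighted average of $\bar{X}$ and $\bar{X^c}$ (resp.\ of $\bar{A}$ and $\bar{A^c}$). Everything is elementary linear algebra; there is no deep step.

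For the left equality, apply the $n=2$ case of the definition to the tuple $(X,A,\mathcal{X},\mathcal{A})$. The centering vector is $\vmu=(\bar{X}+\bar{\mathcal{X}})/2$, so $\bar{X}-\vmu=\tfrac12(\bar{X}-\bar{\mathcal{X}})$, while the $i=2$ summand is $(\bar{\mathcal{X}}-\vmu)\cdot(\bar{\mathcal{A}}-\bar{\mathcal{A}})=0$. Hence $g(X,A,\mathcal{X},\mathcal{A})=\tfrac12(\bar{X}-\bar{\mathcal{X}})\cdot(\bar{A}-\bar{\mathcal{A}})$, which is precisely $\tfrac12\,g(X,A)$ by the $n=1$ case (where $\vmu=\bar{\mathcal{X}}$). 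So $g(X,A)=2\,g(X,A,\mathcal{X},\mathcal{A})$.

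For the right equality, first record the total-average identities: since $\mathcal{A}$ is the disjoint union of $A$ and $A^c$, averaging unit vectors gives $\bar{\mathcal{A}}=\tfrac{|A|}{|\mathcal{A}|}\bar{A}+\tfrac{|A^c|}{|\mathcal{A}|}\bar{A^c}$, whence $\bar{A}-\bar{\mathcal{A}}=\tfrac{|A^c|}{|\mathcal{A}|}(\bar{A}-\bar{A^c})$ and $\bar{A^c}-\bar{\mathcal{A}}=-\tfrac{|A|}{|\mathcal{A}|}(\bar{A}-\bar{A^c})$, and symmetrically $\bar{X}-\bar{\mathcal{X}}=\tfrac{|X^c|}{|\mathcal{X}|}(\bar{X}-\bar{X^c})$. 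Next expand $g(X,A,X^c,A^c)$ exactly as in the previous paragraph, now with $\vmu=(\bar{X}+\bar{X^c})/2$: the two summands are $\tfrac12(\bar{X}-\bar{X^c})\cdot(\bar{A}-\bar{\mathcal{A}})$ and $-\tfrac12(\bar{X}-\bar{X^c})\cdot(\bar{A^c}-\bar{\mathcal{A}})$, and substituting the identities above collapses this to $\tfrac12(\bar{X}-\bar{X^c})\cdot(\bar{A}-\bar{A^c})$ because $\tfrac{|A^c|}{|\mathcal{A}|}+\tfrac{|A|}{|\mathcal{A}|}=1$. Finally substitute the identities for $\bar{X}-\bar{\mathcal{X}}$ and $\bar{A}-\bar{\mathcal{A}}$ into $g(X,A,\mathcal{X},\mathcal{A})=\tfrac12(\bar{X}-\bar{\mathcal{X}})\cdot(\bar{A}-\bar{\mathcal{A}})$ from the first step, obtaining $\tfrac{|X^c|}{|\mathcal{X}|}\tfrac{|A^c|}{|\mathcal{A}|}\cdot\tfrac12(\bar{X}-\bar{X^c})\cdot(\bar{A}-\bar{A^c})=\tfrac{|X^c|}{|\mathcal{X}|}\tfrac{|A^c|}{|\mathcal{A}|}\,g(X,A,X^c,A^c)$.

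I do not anticipate any genuine obstacle. The one thing to watch is that for $n=2$ the vector $\vmu$ in the definition of $g$ is the mean of the two group means, not the global mean $\bar{\mathcal{X}}$; keeping the resulting factors of $\tfrac12$ straight in $\bar{X}-\vmu$, and checking the bookkeeping as $n$ alternates between $1$ and $2$ across the three terms of the claimed identity, is all the proof really requires.
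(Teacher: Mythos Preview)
Your proof is correct and follows essentially the same approach as the paper: both hinge on the size-weighted average identities $\bar{X}-\bar{\mathcal{X}}=\tfrac{|X^c|}{|\mathcal{X}|}(\bar{X}-\bar{X^c})$ and $\bar{A}-\bar{\mathcal{A}}=\tfrac{|A^c|}{|\mathcal{A}|}(\bar{A}-\bar{A^c})$. The only cosmetic difference is that the paper imports the identity $2g(X_1,A_1,X_2,A_2)=(\bar{X}_1-\bar{X}_2)\cdot(\bar{A}_1-\bar{A}_2)$ from the proof of Lemma~\ref{lem:decomp}, whereas you re-derive it from the definition (noting the second summand vanishes when $A_2=\mathcal{A}$); your route is self-contained but otherwise identical in substance.
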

\begin{lemma}\label{lem:decomp3}
For any $n> 1$ and nonempty $X_1, X_2, \ldots, X_n\subseteq \mathcal{X}$ and $A_1, A_2, \ldots, A_n\subseteq \bar{\mathcal{A}}$,
\begin{equation*}
g(X_1,A_1,\ldots,X_n,A_n) = \sum_{i\in [n]} g(X_i, A_i) - \sum_{i,j\in [n]} \frac{g(X_i, A_j)}{n}\end{equation*}
\end{lemma}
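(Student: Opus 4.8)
The plan is to reduce the identity to elementary bilinear algebra on the unit-normalized mean vectors $\bar{X}_i$, $\bar{A}_i$, $\bar{\mathcal{X}}$, $\bar{\mathcal{A}}$, using only linearity of the inner product and the fact that, since $n>1$, the center in the $n$-group statistic is $\vmu=\frac1n\sum_i\bar{X}_i$. First I would expand the three quantities in sight. By the $n=1$ clause of the definition, $g(X_i,A_i)=(\bar{X}_i-\bar{\mathcal{X}})\cdot(\bar{A}_i-\bar{\mathcal{A}})$ and $g(X_i,A_j)=(\bar{X}_i-\bar{\mathcal{X}})\cdot(\bar{A}_j-\bar{\mathcal{A}})$, while the left-hand side is $g(X_1,A_1,\ldots,X_n,A_n)=\sum_{i\in[n]}(\bar{X}_i-\vmu)\cdot(\bar{A}_i-\bar{\mathcal{A}})$. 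The only discrepancy between $g(X_i,A_i)$ and the $i$-th summand on the left is that $\bar{X}_i$ is recentered at the global mean $\bar{\mathcal{X}}$ in the former and at the within-groups mean $\vmu$ in the latter.

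Next I would write $\bar{X}_i-\bar{\mathcal{X}}=(\bar{X}_i-\vmu)+(\vmu-\bar{\mathcal{X}})$, substitute into $g(X_i,A_i)$, and sum over $i$, pulling the constant vector $\vmu-\bar{\mathcal{X}}$ out of the sum:
$$\sum_{i\in[n]} g(X_i,A_i) = g(X_1,A_1,\ldots,X_n,A_n) + (\vmu-\bar{\mathcal{X}})\cdot\sum_{i\in[n]}(\bar{A}_i-\bar{\mathcal{A}}).$$
It then remains to identify the correction term $(\vmu-\bar{\mathcal{X}})\cdot\sum_i(\bar{A}_i-\bar{\mathcal{A}})$ with $\frac1n\sum_{i,j\in[n]}g(X_i,A_j)$. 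Expanding that double sum by bilinearity, it factors as $\frac1n\bigl(\sum_i(\bar{X}_i-\bar{\mathcal{X}})\bigr)\cdot\bigl(\sum_j(\bar{A}_j-\bar{\mathcal{A}})\bigr)$; since $\sum_i\bar{X}_i=n\vmu$, the first factor is $n(\vmu-\bar{\mathcal{X}})$, the $n$'s cancel, and we are left with exactly the correction term. Subtracting $\frac1n\sum_{i,j}g(X_i,A_j)$ from $\sum_i g(X_i,A_i)$ therefore yields $g(X_1,A_1,\ldots,X_n,A_n)$, as claimed.

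I do not expect a genuine obstacle here; it is a routine computation. The one point requiring care is the bookkeeping around the two different centers: the two-argument statistics $g(X_i,A_i)$ and $g(X_i,A_j)$ on the right-hand side recenter names at $\bar{\mathcal{X}}$, whereas the $n$-group statistic recenters at $\vmu$, and it is precisely this mismatch—summed over the groups and contracted against $\sum_i(\bar{A}_i-\bar{\mathcal{A}})$—that produces the $\frac1n\sum_{i,j}g(X_i,A_j)$ term. I would also remark that the argument uses no equal-size or disjointness hypotheses on the $X_i$ or $A_j$: it only needs each set nonempty (so the normalized means are defined) together with the identities $\sum_i(\bar{X}_i-\vmu)=0$ and $\sum_i\bar{X}_i=n\vmu$.
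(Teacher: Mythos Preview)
Your proposal is correct and is exactly the computation the paper has in mind: the paper's proof of this lemma is the single sentence ``Follows simply from the definition of $g$ and $\mu$ for $n\geq 2$ and $n=1$,'' and you have spelled out precisely that expansion, correctly tracking the shift between the two centers $\bar{\mathcal{X}}$ (used in the one-group statistics on the right) and $\vmu=\frac1n\sum_i\bar{X}_i$ (used in the $n$-group statistic on the left).
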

Lemma~\ref{lem:decomp} explains why we call it a generalization: for $n=2$ and equal-sized name sets, the values are proportional with a factor that only depends on the set size. More generally, $g$ can accommodate unequal set sizes and $n\neq 2$. 

Lemma~\ref{lem:decomp2} shows that for $n=1$ group, the definition is proportional the WEAT with the two groups $X$ vs.~all names $\mathcal{X}$ and words $A$ vs.~$\mathcal{A}$. Equivalently, it is proportional to the WEAT between $X$ and $A$ and their compliments.

Finally, Lemma~\ref{lem:decomp3} gives a {\em decomposition} of a WEAT into $n^2$ single-group WEATs $g(X_i,A_j)$. In particular, the value of a single multi-group WEAT reflects a combination of the $n$ association strengths between $X_i$ and $A_i$, and $n^2$ disassociation strengths between $X_i$ and $A_j$.  
As discussed on the literature on IATs, a large effect could reflect a strong association between $X_1$ and $A_1$ or $X_2$ and $A_2$, a strong disassociation between $X_1$ and $A_2$ or $X_2$ and $A_1$, or some combination of these factors. Proofs are deferred to Appendix \ref{ap:proofs}.

\section{Unsupervised Bias Enumeration algorithm}\label{sec:alg}

The inputs to our UBE algorithm are shown in Table~\ref{tab:constants}. The output is $m$ WEATs, each with $n$ groups with associated sets of words and statistical confidences (p-values) in $[0,1]$. Each WEAT has words from a single category, but several of the $m$ WEATs may yield no significant associations. 



At a high level, the algorithm follows a simple structure. It selects $n$ disjoint groups of names $X_1,\ldots, X_n \subset \mathcal{X}$, and $m$ disjoint categories of lower-case words $\mathcal{A}_1,\ldots, \mathcal{A}_m$. All WEATs share the same $n$ name groups, and each WEAT has words from a single category $\mathcal{A}_j$, with $t$ words associated to each $X_i$. Thus the WEATs can be conveniently visualized in a tabular structure.

For convenience, we normalize all word embedding vectors to be unit length. Note that we only compute cosines between them, and the cosine is simply the inner product for unit vectors. We now detail the algorithm's steps. 

\subsection{Step 1: Cleaning names and defining groups}\label{sec:clean}
We begin with a set of names\footnote{While the set of names is an input to our system, they could also be extracted from the embedding itself.} $\mathcal{X}$, e.g., frequent first names from a database. Since word embeddings do not differentiate between words that have the same spelling but different meanings, we first ``clean'' the given names to remove names such as ``May'' and ``Virginia'', whose embeddings are more reflective of other uses, such as a month or verb and a US state. Our cleaning procedure, detailed in Appendix C, is similar to that of \cite{Caliskan17:Semantics}.

We then use K-means++ clustering \citep[from scikit-learn,][with default parameters]{scikit-learn} 
 to cluster the normalized word vectors of the names, yielding groups  $X_1\cup \ldots\cup X_n=\mathcal{X}$. Finally, we define $\mu = \sum_i \bar{X}_i/n$.

\begin{table}
    \centering
    \begin{tabular}{r|l|l}
        name & meaning & default\\\hline
        {\em WE} & word embedding & \verb|w2v| \\
        $\mathcal{X}$ & set of names & SSA\\
        $n$ & number of target groups & 12\\
        $m$ & number of categories & 64\\
        $M$ & number of frequent lower-case words & 30,000\\
        $t$ & number of words per WEAT & 3\\
        $\alpha$ & false discovery rate & 0.05\\
    \end{tabular}
    \smallskip
    \caption{Inputs to the UBE algorithm.}
    \label{tab:constants}
\end{table}


\subsection{Step 2: Defining word categories} 
To define categories, we cluster the most frequent $M$ lower-case tokens in the word embedding into $m$ clusters using K-means++, yielding clusters of categories $\mathcal{A}_1,\ldots, \mathcal{A}_m$. The constant $M$ is chosen to cover as many recognizable words as possible without introducing too many unrecognizable tokens. 
As we shall see, categories capture concepts such as occupations, food-related words, and so forth.  

\subsection{Step 3: Selecting words $A_{ij}\subset \mathcal{A}_j$} 
A test $\mathcal{T}_j=(X_1,A_{1j}, \ldots,X_n,A_{nj})$ is chosen with disjoint $A_{ij}\subset \mathcal{A}_{j}$, each of size $t=|A_{ij}|$. To ensure disjointness,\footnote{
If multiplicities are desired, the Voronoi sets $V_{ij}$ could be omitted, optimizing $A_{ij}\subset \mathcal{A}_j$ directly.} $\mathcal{A}_j$ is first partitioned into $n$ ``Voronoi'' sets $V_{ij}\subseteq \mathcal{A}_j$ consisting of the words whose embedding is closest to each corresponding center $\bar{X}_i$, i.e.,
$$V_{ij} = \left\{w \in \mathcal{A}_j~|~ i = \arg\max_{i'\in [n]} \bar{w} \cdot \bar{X}_{i'}\right\}$$
It then outputs $A_{ij}$ defined as the $t$ words maximizing the following:
$$\max_{w \in V_{ij}} (\bar{X}_i-\mu)\cdot (\bar{w}-\bar{\mathcal{A}}_j)$$ 

The more computationally-demanding step is to compute, using Monte Carlo sampling, the $n$ p-values for $\mathcal{T}_j$, as described next. 

\subsection{Step 4: Computing p-values and ordering}
To test whether the associations we find are larger than one would find if there was no relationship between the names $X_i$ and words $\mathcal{A}$, we consider the following ``\textbf{rotational null hypothesis}'': the words in the embedding are generated through some process in which the alignment between names and words is random. This is formalized by imagining that a random rotation was applied (multiplying by a uniformly Haar random orthogonal matrix $U$) to the word embeddings but not to the name embeddings.

Specifically, to compute p-value $p_{ij}$ for each $(X_i, A_{ij})$, we first compute a score $\sigma_{ij} = (\bar{X}_i-\mu)\cdot(\bar{A}_{ij}-\bar{\mathcal{A}})$.  We then compute $R=10,000$ uniformly random orthogonal rotations $U_1, \ldots, U_R \in \reals^{d \times d}$, drawn according to the Haar measure. For each rotation, we simulate running our algorithm as if the name embeddings were transformed by $U$ (while the word embeddings remain as is). For each rotation $U_r$, the sets $A_{ijr}$ chosen to maximize $(\bar{X}_iU_r-\mu U_r)\cdot (\bar{w}-\bar{\mathcal{A}}_j)$, and the corresponding $V_{ijr}$ and the resulting $\sigma_{ijr}$ are computed. Finally, $p_{ij}$ is the fraction of rotations for which the score $\sigma_{ijr}\geq \sigma_{ij}$ (plus an add-1 penalty standard for Monte Carlo p-values). 


Furthermore, since the algorithm outputs many (hundreds) of name/word biases, the Benjamini-Hochberg  \citeyearpar{BenjaminiHochberg95} procedure is used to determine a critical p-value that guarantees an $\alpha$ bound on the rate of false discoveries.
Finally, to choose an output ordering on significant tests, 
the $m$ tests are then sorted by the total scores $\sigma_{ij}$ over the pairs determined significant.

\section{Evaluation}

To illustrate the performance of the proposed system in discovering associations, we use a database of first names provided by the Social Security Administration (SSA), which contains number of births per year by sex (F/M)~\citep{SSA}. Preprocessing details are in Appendix C.

We use three publicly available word embeddings, each with $d=300$ dimensions and millions of words:  \verb|w2v|, released in 2013 and trained on approximately 100 billion words from Google News \citep{mikolov2013distributed}, \verb|fast|, trained on 600 billion words from the Web \citep{mikolov2018advances}, and \verb|glove|, also trained on the Web using the GloVe algorithm \citep{pennington2014glove}.



While it is possible to display the three words in each $A_{ij}$, the hundreds or thousands of names in each $X_i$ cannot be displayed in the output of the algorithm. Instead, we use a simple greedy heuristic to give five ``illustrative'' names for each group, which are displayed in the tables in this paper and in our crowdsourcing experiments.  The $k+1^\text{st}$ name shown is chosen, given the first $k$ names, so as to maximize the average similarity of the first $k+1$ names to that of the entire set $X_i$. Hence, the first name is the one whose normalized vector is most central (closest to the cluster mean), the second name is the one which when averaged with the first is as central as possible, and so forth. 

The WEATs can be evaluated in terms of the quality of the name groups and also their associations with words. 
A priori, it was not clear whether clustering name embeddings would yield any name groups or word categories of interest. For all three embeddings we find that the clustering captures latent groups defined in terms of race, age, and gender (we only have binary gender statistics), as illustrated in Table \ref{table:fullw2v_names} for $n=12$ clusters.  
While even a few clusters suffice to capture some demographic differences, more clusters yield much more fine-grained distinctions. For example, with $n=12$ one cluster is of evidently Israeli names (see column I of table \ref{table:fullw2v_names}), which one might not consider predefining a priori since they are a small minority in the U.S. Table~\ref{table:names_other_embeddings} in the Appendix shows demographic composition of clustering for other embeddings. Note that, although we do not have religious statistics for the names, several of the words in the generated associations are religious in nature, suggesting religious biases as well.

\begin{table*}
{\small
\setlength\tabcolsep{1pt}
\begin{tabular*}{\textwidth}{@{\extracolsep{\fill}}rrrrrrrrrrrr}
\toprule
\textbf{w2v~F1} & \textbf{w2v~F2} & \textbf{w2v~F3} & \textbf{w2v~F4} & \textbf{w2v~F5} & \textbf{w2v~F6} & \textbf{w2v~F7} & \textbf{w2v~F8} & \textbf{w2v~F9} & \textbf{w2v~F10} & \textbf{w2v~F11} & \textbf{w2v~F12} \\
\midrule
Amanda & Janice & Marquisha & Mia & Kayla & Kamal & Daniela & Miguel & Yael & Randall & Dashaun & Keith \\
Renee & Jeanette & Latisha & Keva & Carsyn & Nailah & Lucien & Deisy & Moses & Dashiell & Jamell & Gabe \\
Lynnea & Lenna & Tyrique & Hillary & Aislynn & Kya & Marko & Violeta & Michal & Randell & Marlon & Alfred \\
Zoe & Mattie & Marygrace & Penelope & Cj & Maryam & Emelie & Emilio & Shai & Jordan & Davonta & Shane \\
Erika & Marylynn & Takiyah & Savanna & Kaylei & Rohan & Antonia & Yareli & Yehudis & Chace & Demetrius & Stan \\
~~~+581 & ~~~+840 & ~~~+692 & ~~~+558 & ~~~+890 & ~~~+312 & ~~~+391 & ~~~+577 & ~~~+120 & ~~~+432 & ~~~+393 & ~~~+494 \\
\midrule
98\% F & 98\% F & 89\% F & 85\% F & 78\% F & 65\% F & 59\% F & 56\% F & 40\% F & 27\% F &  5\% F &  4\% F \\
\hline
1983 & 1968 & 1978 & 1982 & 1993 & 1991 & 1985 & 1986 & 1989 & 1981 & 1984 & 1976 \\
\hline
 4\% B &  8\% B & 48\% B & 10\% B &  2\% B &  7\% B &  4\% B &  2\% B &  5\% B & 10\% B & 32\% B &  6\% B \\
 4\% H &  4\% H &  3\% H &  9\% H &  1\% H &  4\% H &  9\% H & 70\% H & 10\% H &  3\% H &  5\% H &  3\% H \\
 3\% A &  3\% A &  1\% A & 11\% A &  1\% A & 32\% A &  4\% A &  8\% A &  5\% A &  4\% A &  3\% A &  5\% A \\
89\% W & 84\% W & 47\% W & 69\% W & 95\% W & 56\% W & 83\% W & 21\% W & 79\% W & 83\% W & 59\% W & 86\% W \\
\midrule
\end{tabular*}

}
\cprotect\caption{Illustrative first names (greedily chosen) for $n=12$ groups on the \verb|w2v| embedding. Demographic statistics (computed a
posteriori) are also shown though were not used in generation, including percentage female (at birth), mean year of birth, and
percentage Black, Hispanic, Asian/Pacific Islander, and White. 
}\label{table:fullw2v_names}
\end{table*}


Table \ref{table:debiased} in the appendix shows the biases found in the ``debiased'' \verb|w2v| embedding of \cite{Bolukbasi16:Man}. While the name clusters still exhibit strong binary gender differences, many fewer statistically significant associations were generated for the most gender-polarized clusters.  

\begin{table}
    \centering
    \begin{tabular}{l|lll}
        Emb. & \# significant & \% accurate & \% offensive \\
        \hline
         \verb|w2v| & 235 & 72\% & 35\%\\
         \verb|fast| & 160 & 80\% & 38\%\\
         \verb|glove| & 442 & 48\% & 24\%\\
    \end{tabular}
    \cprotect\caption{Summary statistics for the WEATs generated using the
    three embeddings ($n=12$, $m=64$). The total number of significant name/word associations, the fraction with which the crowd's choice of name group agreed with that of the 
    generated WEAT (accuracy) among the top-12 WEATs, and the fraction rated as offensive.}
    \label{table:summary}
\end{table}

\begin{table*}
{\fontsize{8.1}{9}\selectfont
\setlength\tabcolsep{1pt}
\begin{tabular*}{\textwidth}{@{\extracolsep{\fill}}lllllllllll}
\toprule
\textbf{w2v~F1} & \textbf{w2v~F2} & \textbf{w2v~F3} & \textbf{w2v~F4} & \textbf{w2v~F5} & \textbf{w2v~F6} & \textbf{w2v~F7} & \textbf{w2v~F8} & \textbf{w2v~F9} & \textbf{w2v~F11} & \textbf{w2v~F12} \\
\midrule
 & \cellcolor{orange!10}cookbook, & \cellcolor{orange!10}sweet &  &  & \cellcolor{orange!10}saffron, & \cellcolor{orange!10}mozzarella, & \cellcolor{orange!10}tortillas, & \cellcolor{orange!10}kosher, & \cellcolor{orange!10}fried      & \cellcolor{orange!10}beef,\\
 & \cellcolor{orange!10}baking, & \cellcolor{orange!10}potatoes, &  &  & \cellcolor{orange!10}halal, & \cellcolor{orange!10}foie gras, & \cellcolor{orange!10}salsa, & \cellcolor{orange!10}hummus, & \cellcolor{orange!10}chicken, & \cellcolor{orange!10}beer,\\
 & \cellcolor{orange!10}baked goods & \cellcolor{orange!10}macaroni, &  &  & \cellcolor{orange!10}sweets & \cellcolor{orange!10}caviar & \cellcolor{orange!10}tequila & \cellcolor{orange!10}bagel & \cellcolor{orange!10}crawfish, & \cellcolor{orange!10}hams\\
 &  & \cellcolor{orange!10}green beans &  &  &  &  &  &  & \cellcolor{orange!10}grams & \\
\hline
\cellcolor{orange!10}herself, & husband, & aunt, & hubby, & \cellcolor{orange!10}twin sister, & \cellcolor{orange!10}elder brother, &  &  & \cellcolor{orange!10}bereaved, & \cellcolor{orange!10}younger & \cellcolor{orange!10}buddy,\\
\cellcolor{orange!10}hers, & homebound, & niece, & socialite, & \cellcolor{orange!10}girls, & \cellcolor{orange!10}dowry, &  &  & \cellcolor{orange!10}immigrated, & \cellcolor{orange!10}brother, & \cellcolor{orange!10}boyhood,\\
\cellcolor{orange!10}moms & grandkids & grandmother & cuddle & \cellcolor{orange!10}classmate & \cellcolor{orange!10}refugee camp &  &  & \cellcolor{orange!10}emigrated & \cellcolor{orange!10}twin brother, & \cellcolor{orange!10}fatherhood\\
 &  &  &  &  &  &  &  &  & \cellcolor{orange!10}mentally & \\
 &  &  &  &  &  &  &  &  & \cellcolor{orange!10}r******** & \\
\hline
hostess, & \cellcolor{orange!10}registered &  & \cellcolor{orange!10}supermodel, & helper, & \cellcolor{orange!10}shopkeeper, &  & \cellcolor{orange!10}translator, &  & cab driver, & \cellcolor{orange!10}pitchman,\\
cheer-     & \cellcolor{orange!10}nurse, &  & \cellcolor{orange!10}beauty queen, & getter, & \cellcolor{orange!10}villager, &  & \cellcolor{orange!10}interpreter, &  & jailer, & \cellcolor{orange!10}retired,\\
leader, & \cellcolor{orange!10}homemaker, &  & \cellcolor{orange!10}stripper & snowboarder & \cellcolor{orange!10}cricketer &  & \cellcolor{orange!10}smuggler &  & schoolboy & \cellcolor{orange!10}pundit\\
dietitian & \cellcolor{orange!10}chairwoman &  &  &  &  &  &  &  &  & \\
\hline
 & \cellcolor{orange!10}log cabin, & \cellcolor{orange!10}front porch, & racecourse, & \cellcolor{orange!10}picnic tables, & \cellcolor{orange!10}locality, & \cellcolor{orange!10}prefecture, &  & \cellcolor{orange!10}synagogues, & \cellcolor{orange!10}apartment & \\
 & \cellcolor{orange!10}library, & \cellcolor{orange!10}carport, & plush, & \cellcolor{orange!10}bleachers, & \cellcolor{orange!10}mosque, & \cellcolor{orange!10}chalet, &  & \cellcolor{orange!10}constructions, & \cellcolor{orange!10}complex, & \\
 & \cellcolor{orange!10}fairgrounds & \cellcolor{orange!10}duplex & tenements & \cellcolor{orange!10}concession & \cellcolor{orange!10}slum & \cellcolor{orange!10}sauna &  & \cellcolor{orange!10}hilltop & \cellcolor{orange!10}barbershop, & \\
 &  &  &  & \cellcolor{orange!10}stand &  &  &  &  & \cellcolor{orange!10}nightclub & \\
\hline
 & \cellcolor{orange!10}parish, & \cellcolor{orange!10}pastor, & \cellcolor{orange!10}goddess, &  & \cellcolor{orange!10}fatwa, & \cellcolor{orange!10}monastery, & \cellcolor{orange!10}rosary, & \cellcolor{orange!10}rabbis, &  & \\
 & \cellcolor{orange!10}church, & \cellcolor{orange!10}baptized, & \cellcolor{orange!10}celestial, &  & \cellcolor{orange!10}mosques, & \cellcolor{orange!10}papal, & \cellcolor{orange!10}parish priest, & \cellcolor{orange!10}synagogue, &  & \\
 & \cellcolor{orange!10}pastoral & \cellcolor{orange!10}mourners & \cellcolor{orange!10}mystical &  & \cellcolor{orange!10}martyrs & \cellcolor{orange!10}convent & \cellcolor{orange!10}patron saint & \cellcolor{orange!10}biblical &  & \\
\hline
\cellcolor{orange!10}volleyball, & athletic & leading & hooker, & \cellcolor{orange!10}sophomore, & leftarm &  &  &  & \cellcolor{orange!10}cornerback, & \\
\cellcolor{orange!10}gymnast, & director, & rebounder, & footy, & \cellcolor{orange!10}junior, & spinner, &  &  &  & \cellcolor{orange!10}tailback, & \\
\cellcolor{orange!10}setter & winningest & played & stud & \cellcolor{orange!10}freshman & dayers, &  &  &  & \cellcolor{orange!10}wide receiver & \\
 & coach, & sparingly, &  &  & leg spinner &  &  &  &  & \\
 & officiating & incoming &  &  &  &  &  &  &  & \\
 &  & freshman &  &  &  &  &  &  &  & \\
\hline
sorority, & \cellcolor{orange!10}volunteer, & guidance &  & \cellcolor{orange!10}seventh & lecturers, &  & \cellcolor{orange!10}bilingual, &  & incoming & \cellcolor{orange!10}fulltime,\\
gymnastics, & \cellcolor{orange!10}volunteering, & counselor, &  & \cellcolor{orange!10}grader, & institutes, &  & \cellcolor{orange!10}permanent &  & freshmen, & \cellcolor{orange!10}professional,\\
majoring & \cellcolor{orange!10}secretarial & prekinder- &  & \cellcolor{orange!10}eighth grade, & syllabus &  & \cellcolor{orange!10}residency, &  & schoolyard, & \cellcolor{orange!10}apprentice-\\
 &  & garten, &  & \cellcolor{orange!10}seniors &  &  & \cellcolor{orange!10}occupations &  & recruiting & \cellcolor{orange!10}ship\\
 &  & graduate &  &  &  &  &  &  &  & \\
\hline
 &  & \cellcolor{orange!10}civil rights, &  &  & \cellcolor{orange!10}subcontinent, & \cellcolor{orange!10}xenophobia, & \cellcolor{orange!10}leftist, & \cellcolor{orange!10}disengage- & \cellcolor{orange!10}blacks, & \\
 &  & \cellcolor{orange!10}poverty &  &  & \cellcolor{orange!10}tribesmen, & \cellcolor{orange!10}anarchist, & \cellcolor{orange!10}drug & \cellcolor{orange!10}ment, & \cellcolor{orange!10}segregation, & \\
 &  & \cellcolor{orange!10}stricken, &  &  & \cellcolor{orange!10}miscreants & \cellcolor{orange!10}oligarchs & \cellcolor{orange!10}traffickers, & \cellcolor{orange!10}intifada, & \cellcolor{orange!10}lynching & \\
 &  & \cellcolor{orange!10}nonviolent &  &  &  &  & \cellcolor{orange!10}undocumented & \cellcolor{orange!10}settlers &  & \\
\hline
tiara, & \cellcolor{orange!10}knitting, & \cellcolor{orange!10}brown eyes, & \cellcolor{orange!10}girly, & brown hair, & \cellcolor{orange!10}sari, &  &  &  & \cellcolor{orange!10}dreadlocks, & \cellcolor{orange!10}mullet,\\
blonde, & \cellcolor{orange!10}sewing, & \cellcolor{orange!10}cream colo..., & \cellcolor{orange!10}feminine, & pair, & \cellcolor{orange!10}turban, &  &  &  & \cellcolor{orange!10}shoulderpads, & \cellcolor{orange!10}gear,\\
sparkly & \cellcolor{orange!10}beaded & \cellcolor{orange!10}wore & \cellcolor{orange!10}flirty & skates & \cellcolor{orange!10}hijab &  &  &  & \cellcolor{orange!10}waistband & \cellcolor{orange!10}helmet\\
\hline
 &  &  &  &  & \cellcolor{orange!10}dirhams, & \cellcolor{orange!10}rubles, & \cellcolor{orange!10}pesos, & \cellcolor{orange!10}shekels, &  & \\
 &  &  &  &  & \cellcolor{orange!10}lakhs, & \cellcolor{orange!10}kronor, & \cellcolor{orange!10}remittances, & \cellcolor{orange!10}settlements, &  & \\
 &  &  &  &  & \cellcolor{orange!10}rupees & \cellcolor{orange!10}roulette & \cellcolor{orange!10}gross re- & \cellcolor{orange!10}corpus &  & \\
 &  &  &  &  &  &  & \cellcolor{orange!10}ceipts &  &  & \\
\hline
 &  & grandjury &  & child & \cellcolor{orange!10}chargesheet, & \cellcolor{orange!10}absentia, & \cellcolor{orange!10}illegal &  & \cellcolor{orange!10}aggravated & \\
 &  & indicted, &  & endangerment, & \cellcolor{orange!10}absconding, & \cellcolor{orange!10}tax evasion, & \cellcolor{orange!10}immigrant, &  & \cellcolor{orange!10}robbery, & \\
 &  & degree     &  & vehicular & \cellcolor{orange!10}interrogation & \cellcolor{orange!10}falsification & \cellcolor{orange!10}drug &  & \cellcolor{orange!10}aggravated & \\
 &  & murder, &  & homicide, &  &  & \cellcolor{orange!10}trafficking, &  & \cellcolor{orange!10}assault, & \\
 &  & violating &  & unlawful &  &  & \cellcolor{orange!10}deported &  & \cellcolor{orange!10}felonious & \\
 &  & probation &  & possession &  &  &  &  & \cellcolor{orange!10}assault & \\
\hline
 & \cellcolor{orange!10}volunteers, & \cellcolor{orange!10}caseworkers, & \cellcolor{orange!10}beauties, & setters, & \cellcolor{orange!10}mediapersons, &  &  &  & \cellcolor{orange!10}recruits, & \\
 & \cellcolor{orange!10}crafters, & \cellcolor{orange!10}evacuees, & \cellcolor{orange!10}celebs, & helpers, & \cellcolor{orange!10}office &  &  &  & \cellcolor{orange!10}reps, & \\
 & \cellcolor{orange!10}baby     & \cellcolor{orange!10}attendants & \cellcolor{orange!10}paparazzi & captains & \cellcolor{orange!10}bearers, &  &  &  & \cellcolor{orange!10}sheriffs & \\
 & \cellcolor{orange!10}boomers &  &  &  & \cellcolor{orange!10}newsmen &  &  &  &  & \\
\hline
\end{tabular*}

}
\cprotect\caption{The top-12 WEATs output by our UBE algorithm on the \verb|w2v| embedding. Columns represent name groups $X_i$ from Table \ref{table:fullw2v_names}, rows represent categories $A_j$ (e.g., a cluster of food-related words). Orange indicate associations where the crowd's most commonly chosen name group agrees with that of the generated WEAT. No significant biases generated for \textbf{w2v F10}. 
}\label{table:fullw2v}
\end{table*}

\subsection{Crowdsourcing Evaluation}

We solicited ratings on the biases generated by the algorithm from US-based crowd workers on Amazon's Mechanical Turk\footnote{\url{http://mturk.com}} platform. 
The aim is to identify whether the biases found by our UBE algorithm are consistent with (problematic) biases held by society at large. To this end, we asked about society's stereotypes, {\em not} personal beliefs.

We evaluated the top 12 WEATs generated by our UBE algorithm for the three embeddings, considering $n=12$ first name groups. Our approach was simple: after familiarizing participants with the 12 groups, we showed the (statistically significant) words and name groups of a WEAT and asked them to identify which words would, stereotypically, be most associated with which names group. A bonus was given for ratings that agreed with most other worker's ratings, incentivizing workers to provide answers that they felt corresponded to widely held stereotypes. 

This design was chosen over a simpler one in which WEATs are shown to individuals who are asked whether or not these are stereotypical. The latter design might support confirmation bias as people may interpret words in such a way that confirms whatever stereotypes they are being asked about. For instance, someone may be able to justify associating the color red with almost any group, a posteriori.

Note that the task presented to the workers involved fine-grained distinctions: for each of the top-12 WEATs, at least 18 workers would each be asked to match the significant $c \leq 12$ word triples to the $c$ name groups (each identified by five names each). For example, workers faced the triple of ``registered nurse, homemaker, chairwoman'' with $c=8$ groups of names, half of which were majority female, and the most commonly chosen group matched the one generated: ``Janice, Jeanette, Lenna, Mattie, Marylynn.''  Across the top-12 WEATs over the three embeddings, the mean number of choices $c$ was 8.1, yet the most commonly chosen group (plurality) agreed with the generated group 65\% of the time (see Table \ref{table:summary}). This is significantly more than one would expect from chance. The top-12 WEATs generated for \verb|w2v| are shown in Table \ref{table:fullw2v}.


One challenge faced in this process was that, in pilot experiments, a significant fraction of the workers were not familiar with many of the names. To address this challenge, we first administered a qualification exam (common in crowdsourcing) in which each worker was shown 36 random names, 3 from each group, and was offered a bonus for each name they could correctly identify the group from which it was chosen. Only workers whose accuracy was greater than 1/2 (which happened 37\% of the time) evaluated the WEATs. Accuracy greater than 50\% on a 12-way classification indicates that the groups of names were meaningful and interpretable to many workers.

Finally, we asked 13-15 workers to rate associations on a scale of 1-7 of {\em political incorrectness}, with 7 being ``politically incorrect, possibly very offensive'' and 1 being ``politically correct, inoffensive, or just random.'' Only those biases for which the most commonly chosen group matched the association identified by the UBE algorithm were included in this experiment. The mean ratings are shown in Table \ref{table:summary} and the terms present in associations deemed most offensive are presented in Table~\ref{table:offensive}.

\subsection{Potential Indirect Biases and Proxies}

Naively, one may think that removing names from a dataset will remove all problematic associations. However, as suggested by \cite{Bolukbasi16:Man}, indirect biases are likely to remain. For example, 
consider the \verb|w2v| word embedding, in which {\em hostess} is closer to {\em volleyball} than to {\em cornerback}, while {\em cab driver} is closer to {\em cornerback} than to {\em volleyball}. These associations, taken from columns \textbf{F1} and \textbf{F11} of Table \ref{table:fullw2v}, might serve as a proxy for gender and/or race. For instance, if someone is applying for a job and their profile includes college sports words, such associations encoded in the embedding may lead to racial or gender biases in cases in which there is no professional basis for these associations. In contrast, {\em volunteer} being closer to {\em volunteers} than {\em recruits} may represent a definitional similarity more than a proxy, if we consider proxies to be associations that mainly have predictive power due to their correlation with a protected attribute. While defining proxies is beyond the scope of this work, we do say that $A_{ij}, A_{i'j}, A_{ij'}, A_{i'j'}$ is a {\em potential indirect bias} if,
\begin{equation}\label{eq:potproxy}
    (\bar{A}_{ij}-\bar{A}_{i'j})\cdot(\bar{A}_{ij'}-\bar{A}_{i'j'})>0. 
\end{equation}
One way to interpret this definition is that if the embedding were to match the pair of word sets $\{A_{ij}, A_{i'j}\}$ to the pair of word sets $\{A_{ij'}, A_{i'j'}\}$, it would align with the way in which they were generated. For example, does the embedding predict that {\em hostess-cab driver} better fits {\em volleyball-cornerback} or {\em cornerback-volleyball} (but this question is asked with sets of $t=3$ words)?
Downstream, this would mean that a replacing a the word {\em cornerback} with {\em volleyball} on a profile would make it closer to {\em hostess} than {\em cab driver}
 
We consider all possible fourtuples of significant associations, such that $1\leq i<i'\leq n$ and $1\leq j<j'\leq m$. In the case of \verb|w2v|, $99\%$ of 2,713 significant fourtuples lead to potential indirect biases according to eq.~(\ref{eq:potproxy}). This statistic is of 98\% of 1,125 fourtuples and 97\% of 1,796 fourtuples for the \verb|fast| and \verb|glove| embeddings, respectively. Hence, while names allow us to capture biases in the embedding, removing names is unlikely to be sufficient to debias the embedding.


\section{Limitations}

Absent clusters show the limitations of our approach and data.
For example, even for large $n$, no clusters represent demographically significant Asian-American groups. However, if instead of names we use surnames  \cite[U.S.~Census,][]{comenetz2016frequently}, a cluster ``Yu, Tamashiro, Heng, Feng, Nakamura, +393'' emerges, which is largely Asian according to Census data (see Table \ref{table:w2vlast} in the Appendix). This distinction may reflect naming practices among Asian Americans \citep{wu1999jones}. 
Similarly, our approach may miss biases against small minorities or other groups whose names are not significantly differentiated. For example, it is not immediately clear to what extent this methodology can capture biases against individuals whose gender identity is non-binary, although interestingly terms associated with transgender individuals were generated and rated as significant and consistent with human biases.

\section{Conclusions and Discussion}\label{sec:disc}

We introduce the problem of Unsupervised Bias Enumeration. We propose and evaluate a UBE algorithm that outputs Word Embedding Association Tests. Unlike humans, where implicit tests are necessary to elicit socially unacceptable biases in a straightforward fashion, word embeddings can be directly probed to output hundreds of biases of varying natures, including numerous offensive and socially unacceptable biases. 

The racist and sexist associations exposed in publicly available word embeddings raise questions about their widespread use. An important open question is how to reduce these biases.

\smallskip
\noindent
\textbf{Acknowledgments}. We are grateful to Tarleton Gillespie, other colleagues and the anonymous reviewers for useful feedback. 



{\fontsize{10pt}{10.5pt}\selectfont
\bibliography{fairidentify}
\bibliographystyle{aaai}}

\newpage
\onecolumn
\appendix

\section{Offensive Stereotypes and Derogatory Terms}\label{ap:bleep}

The authors consulted with colleagues whether to display the offensive terms and stereotypes that emerged from the embedding using our algorithms. First, regarding derogatory terms, people we consulted found the explicit inclusion of some of these terms offensive. We are also sensitive to the fact that, even in investigating them, we are ourselves using them. The terms we bleep-censor in the tables include slurs regarding race, homosexuality, transgender, and mental ability \citep{bianchi2014slurs}. In particular, these include three variants on ``the n word''  \citep{asim2008n}, {\em shemale}, {\em faggot}, {\em twink}, {\em mentally retarded}, and {\em rednecks}. It is not obvious that such slurs would be generated given common naming conventions. Nonetheless, many of these terms were in groups of words that matched stereotypes indicated by crowd workers. 

Of course, the associations of words and groups are also offensive, but unfortunately, it is impossible to convey the nature of these associations without presenting the words in the tables associated with the groups. In an attempt to soften the effect, we use group letters rather than illustrative names or summary statistics in our tables. While this decreases the transparency, it gives the reader a choice about whether or not to examine the associated names.
Some colleagues were taken aback by an initial draft, in which names and associations were displayed in the same table, and it was noted that it that may be especially offensive to individuals
whose name appeared on top of a column of offensive stereotypes.
For the names, we restrict our selection of names to those that had at least 1,000 occurrences in the data so that the name would not be uniquely identified with any individual. 

In addition, we considered withholding the entire tables and merely presenting the rating statistics. However, we decided that, given that our concern in the analysis is uncovering that such troubling associations are being made by these tools, it was important to be 
clear and unflinching
about what we found, and not risk obscuring the very phenomenon in our explanation. 



\section{Proofs of Lemmas}\label{ap:proofs}
\begin{proof}[Proof of Lemma \ref{lem:decomp}]
For $n=2$, using our $\bar{X}$ notation and their assumption $|X_1|=|X_2|$, simple algebra shows that,
$$(\bar{X}_1-\bar{X}_2) \cdot (\bar{A}_1-\bar{A}_2)=\frac{1}{|X_1|}s(X_1,A_1,X_2,A_2).$$
Since $\vmu = (\bar{X}_1+\bar{X}_2)/2$, we have that $\bar{X}_1-\vmu=(\bar{X_1}-\bar{X}_2)/2=-(\bar{X}_2-\vmu)$, and:
\begin{align*}
g(X_1,A_1,X_2,A_2) &= (\bar{X}_1 - \vmu) \cdot (\bar{A}_1-\bar{\mathcal{A}}) +(\bar{X}_2 - \vmu) \cdot (\bar{A}_2-\bar{\mathcal{A}})\\
&= \frac{\bar{X}_1-\bar{X}_2}{2} \cdot \bigl(\bar{A}_1-\bar{\mathcal{A}}-(\bar{A}_2-\bar{\mathcal{A}})\bigr)\\
&= \frac{1}{2} (\bar{X}_1-\bar{X}_2) \cdot (\bar{A}_1-\bar{A}_2),
\end{align*}
which when combined with the previous equality establishes the first equation in Lemma \ref{lem:decomp}.
\end{proof}

\begin{proof}[Proof of Lemma \ref{lem:decomp2}] Since we have shown that $(\bar{X}_1-\bar{X}_2) \cdot (\bar{A}_1-\bar{A}_2) = 2g(X_1,A_1,X_2,A_2)$ above, we immediately have that $g(X, A) = 2 g(X,A,\mathcal{X},\mathcal{A})$. Moreover, simple algebra shows  that $g(X,A,\mathcal{X},\mathcal{A})$ and $g(X, A, X^c, A^c)$ are proportional because $\bar{X}-\bar{\mathcal{X}}=\frac{|X^c|}{|\mathcal{X}|}(\bar{X}-\bar{X^c})$ and similarly $\bar{A}-\bar{\mathcal{A}}=\frac{|A^c|}{|\mathcal{A}|}(\bar{A}-\bar{A^c})$.
\end{proof}
\begin{proof}[Proof of Lemma \ref{lem:decomp3}] Follows simply from the definition of $g$ and $\mu$ for $n\geq 2$ and $n=1$.
\end{proof}

\section{Preprocessing names and words}\label{ap:preprocessing}

\subsection{Preprocessing first names from SSA dataset}
The SSA dataset \citep{SSA} has partial coverage for earlier years and includes all names with at least 5 births, we use only years 1938-2017 and select only the names that appeared at least 1,000 times, which cover more than 99\% of the data by population. From this data, we extract the fraction of female and male births for each name as well as the mean year of birth. Of course, we select only the names appearing in the embedding.

Note that the mean of the fraction of females among our names is significantly greater than 50\%, even though the US population is nearly balanced in binary gender demographics. The subtle reason is there is greater variability in female names in the data, whereas the most common names are more often male. That is, the data have fewer predominantly male first names in total with more people being given those names on average. Since we are including each name only once, this increases the female representation in the population.\footnote{We performed similar experiments on a sample of names drawn according to the population and, while the names are gender balanced, the clusters exhibit less diversity and most often simply are split by gender and age -- one can even have an entire cluster solely consisting of people named {\em Michael}.}

\subsection{Preprocessing last names from U.S.~Census}

A dataset of last names is made publicly available by the Census Bureau of the United States and contains last names occurring at least 100 times in the 2010 census~\citep{comenetz2016frequently}, broken down by percentage of race, including White, Black, Hispanic, Asian and Pacific Islander, and Native American. Again we filter for names that appear at least 1,000 times and apply the binary classification procedure described in Section \ref{sec:clean} to clean the data. 
\subsection{``Cleaning'' names}

\cite{Caliskan17:Semantics} 
apply a simple procedure in which they remove the 20\% of words whose mean similarity to the other names is smallest. We apply a similar but slightly more sophisticated procedure by training an linear Support Vector Machine \citep[scikit-learn's LinearSVC,][with default parameters]{scikit-learn} to distinguish the input names from an equal number of non-names chosen randomly from the most frequent 50,000 words in the embedding. We then remove the 20\% of names with smallest margin in the direction identified by the linear classifier.

Figure \ref{fig:powerlaw} illustrates the effect of cleaning the last names and shows that the names that tend to be removed are those that violate Zipf's law. 

\begin{figure}
	\centering
\includegraphics[width=0.6\columnwidth]{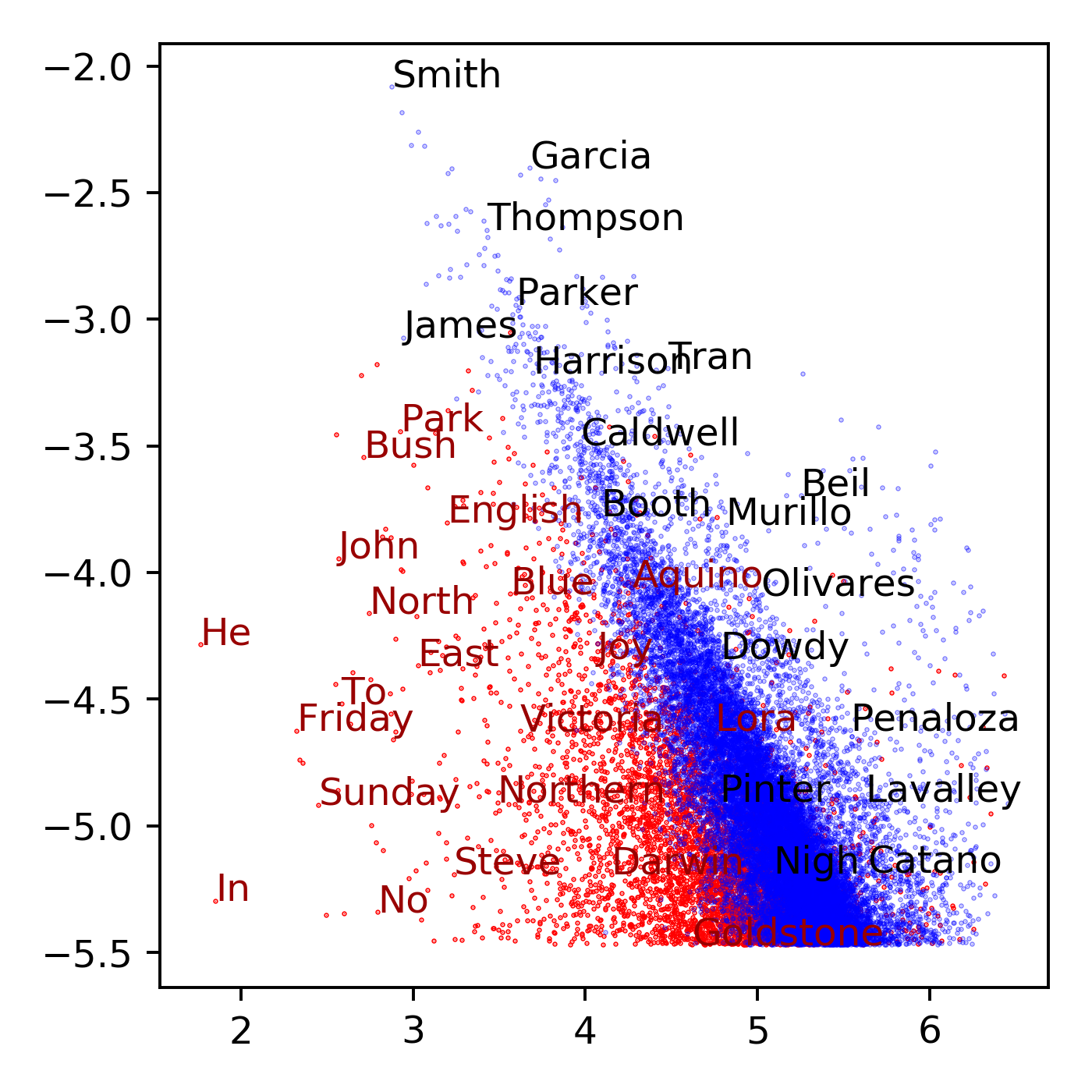}
\caption{A plot of log-probability (y-axis) vs.\ word embedding index (x-axis) for the last name data and the word2vec word embedding. Orange points represent last names we keep and blue points are outliers we remove. As expected from Zipf's law, the probabilities and frequencies exhibit a power-law relationship. Names removed from the data by our classifier, displayed in red, are typically words that have other more common uses than as last names.}
\label{fig:powerlaw}

\end{figure}

\subsection{Preprocessing words}
To identify the most frequent $M$ words in the embedding,
we first restrict to tokens that consist only of the 26 lower-case English letters or spaces for embeddings that contain phrases. We also omit lower-case tokens when the upper-case version of the token is more frequent. For instance, the lower-case token ``john'' is removed because ``John'' is more frequent.

\section{Biases in different lists/embeddings}\label{ap:otherbiases}

Table \ref{table:names_other_embeddings} shows the names from other embeddings. Table \ref{table:debiased} shows the biases found in the ``debiased'' \verb|w2v| embedding of \cite{Bolukbasi16:Man}, while Table \ref{table:w2vlast} show last-name biases generated from the \verb|w2v| embeddings.

\begin{table*}
\small{
\setlength\tabcolsep{1pt}
\begin{tabular*}{\textwidth}{@{\extracolsep{\fill}}rrrrrrrrrrrr}
\toprule
\textbf{fast~F1} & \textbf{fast~F2} & \textbf{fast~F3} & \textbf{fast~F4} & \textbf{fast~F5} & \textbf{fast~F6} & \textbf{fast~F7} & \textbf{fast~F8} & \textbf{fast~F9} & \textbf{fast~F10} & \textbf{fast~F11} & \textbf{fast~F12} \\
\midrule
Nakesha & Carolyn & Tamara & Lillian & Alejandra & Katelyn & Ahmed & Landon & Stephan & Marquell & Greg & Gerardo \\
Keisha & Nichole & Emi & Lucinda & Maricella & Jayda & Shanti & Keenan & Nahum & Antwan & Willie & Renato \\
Kandyce & Mel & Isabella & Velda & Ona & Shalyn & Mariyah & Skye & Sabastian & Dakari & Edward & Pedro \\
Kamilah & Tawnya & Karina & Antoinette & Fabiola & Jaylyn & Siddharth & Courtland & Philippe & Pernell & Jefferey & Genaro \\
Rachal & Deirdre & Joli & Flossie & Sulema & Evie & Yasmin & Luke & Jarek & Jarred & Russ & Matteo \\
~~~+702 & ~~~+821 & ~~~+622 & ~~~+478 & ~~~+400 & ~~~+851 & ~~~+288 & ~~~+576 & ~~~+312 & ~~~+440 & ~~~+474 & ~~~+234 \\
\midrule
98\% F & 98\% F & 97\% F & 96\% F & 93\% F & 90\% F & 64\% F & 22\% F &  9\% F &  6\% F &  4\% F &  2\% F \\
\hline
1980 & 1972 & 1987 & 1972 & 1984 & 1993 & 1992 & 1991 & 1987 & 1984 & 1973 & 1987 \\
\hline
29\% B &  4\% B &  5\% B & 14\% B &  2\% B &  3\% B &  6\% B &  5\% B &  6\% B & 34\% B &  8\% B &  1\% B \\
 3\% H &  2\% H &  9\% H &  9\% H & 64\% H &  2\% H &  4\% H &  1\% H &  9\% H &  3\% H &  3\% H & 65\% H \\
 1\% A &  2\% A &  6\% A &  6\% A &  8\% A &  2\% A & 33\% A &  3\% A &  4\% A &  2\% A &  5\% A &  7\% A \\
66\% W & 91\% W & 80\% W & 71\% W & 25\% W & 93\% W & 56\% W & 90\% W & 80\% W & 61\% W & 84\% W & 27\% W \\
\midrule
\end{tabular*}

\medskip

\setlength\tabcolsep{1pt}
\begin{tabular*}{\textwidth}{@{\extracolsep{\fill}}rrrrrrrrrrrr}
\toprule
\textbf{glove~F1} & \textbf{glove~F2} & \textbf{glove~F3} & \textbf{glove~F4} & \textbf{glove~F5} & \textbf{glove~F6} & \textbf{glove~F7} & \textbf{glove~F8} & \textbf{glove~F9} & \textbf{glove~F10} & \textbf{glove~F11} & \textbf{glove~F12} \\
\midrule
Elsie & Brenda & Claudia & Patrica & Kylee & Laticia & Alejandra & Amina & Eldridge & Damion & Kevin & Gustavo \\
Carlotta & Katie & Tiara & Caren & Shaye & Jayci & Epifanio & Yair & Tad & Ronney & Ernest & Etienne \\
Elizabeth & Janette & Lena & Mikala & Tayla & Shalanda & Monalisa & Rani & Godfrey & Winford & Haley & Lorenzo \\
Dovie & Liza & Melina & Cherise & Latasha & Kalynn & Eulalia & Danial & Asa & Tavaris & Matt & Emil \\
Gladys & Debra & Sasha & Lorine & Jessi & Noelani & Alicea & Safa & Renard & Tylor & Gilbert & Roberto \\
~~~+263 & ~~~+396 & ~~~+359 & ~~~+889 & ~~~+520 & ~~~+1270 & ~~~+395 & ~~~+396 & ~~~+434 & ~~~+627 & ~~~+429 & ~~~+218 \\
\midrule
99\% F & 98\% F & 95\% F & 94\% F & 89\% F & 83\% F & 68\% F & 58\% F & 18\% F & 11\% F &  7\% F &  6\% F \\
\hline
1972 & 1974 & 1987 & 1973 & 1987 & 1978 & 1985 & 1989 & 1979 & 1982 & 1979 & 1987 \\
\hline
15\% B &  4\% B &  6\% B &  7\% B &  9\% B & 14\% B &  1\% B &  5\% B & 13\% B & 11\% B &  7\% B &  3\% B \\
11\% H &  3\% H & 12\% H &  3\% H &  3\% H & 28\% H & 67\% H &  4\% H &  3\% H &  2\% H &  3\% H & 41\% H \\
 6\% A &  3\% A &  7\% A &  2\% A &  3\% A &  2\% A &  9\% A & 22\% A &  4\% A &  2\% A &  4\% A &  6\% A \\
68\% W & 89\% W & 73\% W & 88\% W & 85\% W & 55\% W & 22\% W & 68\% W & 80\% W & 84\% W & 85\% W & 50\% W \\
\midrule
\end{tabular*}

\medskip

\setlength\tabcolsep{1pt}
\begin{tabular*}{\textwidth}{@{\extracolsep{\fill}}rrrrrrrrrrrr}
\toprule
\textbf{deb.~F1} & \textbf{deb.~F2} & \textbf{deb.~F3} & \textbf{deb.~F4} & \textbf{deb.~F5} & \textbf{deb.~F6} & \textbf{deb.~F7} & \textbf{deb.~F8} & \textbf{deb.~F9} & \textbf{deb.~F10} & \textbf{deb.~F11} & \textbf{deb.~F12} \\
\midrule
Denise & Kayla & Evelyn & Marquisha & Zoe & Kamal & Nicolas & Luis & Michal & Shaneka & Randall & Brian \\
Audrey & Lynae & Marquetta & Madalynn & Nana & Nailah & Carmella & Deisy & Astrid & Dondre & Scarlett & Ernie \\
Maryalice & Gabe & Gaylen & Celene & Crystal & Kalan & Adrien & Alexandro & Ezra & Laquanda & Windell & Matthew \\
Sonja & Tayla & Gaye & Nyasia & Georgiana & Aisha & Stefania & Elsa & Armen & Tavon & Corrin & Kenny \\
Glenna & Staci & Eula & Lanora & Sariyah & Rony & Raphael & Eliazar & Juliane & Tanesha & Coley & Wayne \\
~~~+714 & ~~~+845 & ~~~+506 & ~~~+819 & ~~~+512 & ~~~+334 & ~~~+322 & ~~~+538 & ~~~+282 & ~~~+688 & ~~~+407 & ~~~+313 \\
\midrule
99\% F & 81\% F & 80\% F & 78\% F & 71\% F & 62\% F & 59\% F & 56\% F & 54\% F & 49\% F & 29\% F &  5\% F \\
\hline
1971 & 1989 & 1969 & 1984 & 1984 & 1991 & 1984 & 1986 & 1987 & 1983 & 1982 & 1974 \\
\hline
 4\% B &  4\% B & 17\% B &  5\% B & 10\% B &  6\% B &  6\% B &  1\% B &  2\% B & 49\% B &  9\% B &  5\% B \\
 3\% H &  3\% H &  6\% H &  3\% H &  9\% H &  5\% H & 16\% H & 72\% H &  6\% H &  3\% H &  3\% H &  3\% H \\
 3\% A &  2\% A &  4\% A &  3\% A & 11\% A & 32\% A &  5\% A &  8\% A &  3\% A &  2\% A &  4\% A &  5\% A \\
89\% W & 91\% W & 72\% W & 89\% W & 70\% W & 56\% W & 73\% W & 18\% W & 88\% W & 45\% W & 83\% W & 87\% W \\
\midrule
\end{tabular*}

\medskip

\setlength\tabcolsep{1pt}
\begin{tabular*}{\textwidth}{@{\extracolsep{\fill}}rrrrrrrrrrrr}
\toprule
\textbf{w2v~L1} & \textbf{w2v~L2} & \textbf{w2v~L3} & \textbf{w2v~L4} & \textbf{w2v~L5} & \textbf{w2v~L6} & \textbf{w2v~L7} & \textbf{w2v~L8} & \textbf{w2v~L9} & \textbf{w2v~L10} & \textbf{w2v~L11} & \textbf{w2v~L12} \\
\midrule
Moser & Stein & Boyer & Romano & Murphy & Cantrell & Gauthier & Burgess & Gaines & Lal & Mendez & Yu \\
Persson & Zucker & Lasher & Klimas & Nagle & Wooddell & Medeiros & Willson & Derouen & Haddad & Aguillon & Tamashiro \\
Pagel & Avakian & Sawin & Pecoraro & Igoe & Maness & Lafrance & Hatton & Gaskins & Mensah & Aispuro & Heng \\
Runkel & Sobel & Stoudt & Arnone & Crosbie & Newcomb & Lounsbury & Mutch & Aubrey & Vora & Forero & Feng \\
Wagner & Tepper & Mcintire & Morreale & Dillon & Greathouse & Renard & Patten & Rodgers & Omer & Jurado & Nakamura \\
~~~+3035 & ~~~+775 & ~~~+3013 & ~~~+1416 & ~~~+665 & ~~~+2444 & ~~~+756 & ~~~+2818 & ~~~+1779 & ~~~+423 & ~~~+1913 & ~~~+393 \\
\midrule
 1\% B &  2\% B &  3\% B &  1\% B &  4\% B &  8\% B &  8\% B & 12\% B & 34\% B & 15\% B &  1\% B &  1\% B \\
 2\% H &  3\% H &  2\% H &  6\% H &  3\% H &  2\% H &  4\% H &  3\% H &  3\% H &  7\% H & 80\% H &  3\% H \\
 1\% A &  1\% A &  1\% A &  1\% A &  1\% A &  1\% A &  1\% A &  1\% A &  1\% A & 28\% A &  5\% A & 79\% A \\
94\% W & 93\% W & 92\% W & 91\% W & 90\% W & 86\% W & 85\% W & 81\% W & 60\% W & 46\% W & 12\% W & 11\% W \\
\midrule
\end{tabular*}

}
\cprotect\caption{The first name clusters from the \verb|fast|, \verb|glove| and \verb|debiased| embeddings, followed by last name clusters from the \verb|w2v| embedding.
Demographic statistics (computed a
posteriori) are also shown though were not used in generation, including percentage female (at birth), mean year of birth, and
percentage Black, Hispanic, Asian/Pacific Islander, and White.\label{table:names_other_embeddings}
}
\end{table*}

\begin{table*}
{\fontsize{8.8}{10.2}\selectfont
\setlength\tabcolsep{1pt}
\begin{tabular*}{\textwidth}{@{\extracolsep{\fill}}llllllllll}
\toprule
\textbf{deb.~F1} & \textbf{deb.~F2} & \textbf{deb.~F3} & \textbf{deb.~F4} & \textbf{deb.~F5} & \textbf{deb.~F6} & \textbf{deb.~F7} & \textbf{deb.~F8} & \textbf{deb.~F9} & \textbf{deb.~F10} \\
\midrule
professor & eighth grader, & lifelong & granddaughter, & bloke, & shopkeeper, & mobster, & translator, & mathematician, & cousin,\\
emeritus, & seventh & resident, & grandson, & chap, & villager, & chef, & interpreter, & physicist, & jailer,\\
registered & grader, & postmaster, & daughter & hubby & elder brother & restaurateur & notary & researcher & roommate\\
nurse, & sixth grader & homemaker &  &  &  &  &  &  & \\
adjunct &  &  &  &  &  &  &  &  & \\
professor &  &  &  &  &  &  &  &  & \\
\hline
volunteering, & seniors, & grandparents, & graduated, & bedtime, & expatriate, &  & undocumented, &  & blacks,\\
homebound, & eighth grade, & aunts, & grandchildren, & marital, & hostels, &  & farmworkers, &  & academically,\\
nurse & boys & elderly & siblings & bisexual & postgraduate &  & bilingual &  & mentally\\
practitioner &  &  &  &  &  &  &  &  & r********\\
\hline
 & medley, & bluegrass, & trombone, &  & artiste, & maestro, & flamenco, & avant garde, & rapper,\\
 & solo, & bandleader, & percussionist, &  & verse, & accordion, & tango, & violinist, & gospel,\\
 & trio & banjo & clarinet &  & remix & operas & vibes & techno & hip hop\\
\hline
 & volleyball, & bass fishing, & wearing & racecourse, & cricket, & peloton, &  & luge, & basketball,\\
 & softball, & rodeo, & helmet, & footy, & badminton, & anti doping, &  & biathlon, & sprints,\\
 & roping & deer hunting & horseback & footballing & cricketing & gondola &  & chess & lifting\\
 &  &  & riding, &  &  &  &  &  & weights\\
 &  &  & snorkeling &  &  &  &  &  & \\
\hline
 &  & rural, & westbound, & foreshore, & slum, & seaside, & barangays, & settlements, & \\
 &  & fairgrounds, & southbound, & tenements, & headquarter, & boutiques, & squatters, & prefecture, & \\
 &  & tract & eastbound & tourist & minarets & countryside & plazas & inhabitants & \\
 &  &  &  & attraction &  &  &  &  & \\
\hline
 &  & supper, & macaroni, &  & halal, & pizzeria, & tortillas, & kosher, & \\
 &  & barbecue, & green beans, &  & sweets, & mozzarella, & salsa, & vodka, & \\
 &  & chili & pancakes &  & hummus & pasta & tequila & bagel & \\
\hline
 &  &  &  &  & dirhams, & euros, & peso, & supervisory & \\
 &  &  &  &  & emirate, & francs, & reais, & board, & \\
 &  &  &  &  & riyals & vintages & nationalized & zloty, & \\
 &  &  &  &  &  &  &  & ruble & \\
\hline
 &  & pastor, & baptized, & mystical, & fatwa, & nuns, &  & rabbis, & \\
 &  & church, & sisters, & witch, & mosque, & papal, &  & synagogue, & \\
 &  & parish & brothers & afterlife & martyrs & monastery &  & commune & \\
\hline
 & captains, & caretakers, & cousins, & punters, & mediapersons, &  &  &  & rappers,\\
 & bridesmaids, & grandmothers, & helpers, & blokes, & office &  &  &  & recruits,\\
 & grads & superinten- & friends & celebs & bearers, &  &  &  & officers\\
 &  & dents &  &  & shopkeepers &  &  &  & \\
\hline
 &  &  & clan, &  & subcontinent, &  & leftist, & rightist, & civil rights,\\
 &  &  & overthrow, &  & rulers, &  & indigenous & disengage- & segregation,\\
 &  &  & starvation &  & tribals &  & peoples, & ment, & racial\\
 &  &  &  &  &  &  & peasants & oligarchs & \\
\hline
 &  &  &  &  & rupees, &  & pesos, & shekels, & \\
 &  &  &  &  & dinars, &  & remittances, & rubles, & \\
 &  &  &  &  & crores &  & cooperatives & kronor & \\
\hline
 &  & convicted & child &  & chargesheet, & absentia, &  &  & aggravated\\
 &  & felon, & endangerment, &  & absconding, & annulment, &  &  & robbery,\\
 &  & felony & unlawful &  & petitioner & penitentiary &  &  & aggravated\\
 &  & convictions, & possession, &  &  &  &  &  & assault,\\
 &  & probate & vehicular &  &  &  &  &  & felonious\\
 &  &  & homicide &  &  &  &  &  & assault\\
\hline
\end{tabular*}

}
\cprotect\caption{The top-12 WEATs output by our UBE algorithm on the ``debiased'' \verb|w2v| embedding of \cite{Bolukbasi16:Man}, again with $n=12$. Despite being debiased, demographic statistics (again computed a posteriori) reveal names still cluster by gender, but the extreme gender clusters have many fewer statistically significant associations. For instance, the most male groups \textbf{deb.~F11} and \textbf{deb.~F12} are not shown because no significant associations were generated.
}\label{table:debiased}
\end{table*}



\begin{table*}
{\fontsize{8.6}{10}\selectfont
\setlength\tabcolsep{1pt}
\begin{tabular*}{\textwidth}{@{\extracolsep{\fill}}llllllllllll}
\toprule
\textbf{w2v~L1} & \textbf{w2v~L2} & \textbf{w2v~L3} & \textbf{w2v~L4} & \textbf{w2v~L5} & \textbf{w2v~L6} & \textbf{w2v~L7} & \textbf{w2v~L8} & \textbf{w2v~L9} & \textbf{w2v~L10} & \textbf{w2v~L11} & \textbf{w2v~L12} \\
\midrule
potato & kosher, & pumpkin, & mozzarella, & pint, & pecans, & maple   & cider, & fried      & sweets, & tortillas, & noodles,\\
salad, & bagel, & brownies, & pasta, & whiskey, & grits, & syrup, & lager, & chicken, & saffron, & salsa, & dumplings,\\
pretzels, & hummus & donuts & deli & cheddar & watermelon & syrup, & malt & crawfish, & mango & tequila & soy sauce\\
chocolate &  &  &  &  &  & foie gras &  & sweet &  &  & \\
cake &  &  &  &  &  &  &  & potatoes &  &  & \\
\hline
concentra- & disengage- &  &  & unionists, &  & province, & antisocial & blacks, & non & drug & hyun,\\
tion & ment, &  &  & sectarian, &  & separatist, & behavior, & segrega- & governmen- & traffick- & bian,\\
camp, & neocons, &  &  & pedophiles &  & sover- & cricket, & tion, & tal, & ers, & motherland\\
extermina- & intifada &  &  &  &  & eignty & asylum & civil & miscreants, & leftist, & \\
tion, &  &  &  &  &  &  & seekers & rights & encroach- & undocu- & \\
postwar &  &  &  &  &  &  &  &  & ments & mented & \\
\hline
 & co founder, & assessor, & restaura- & solicitor, & jailer, &  & schoolboy, & cheer-     & shopkeeper, & translator, & villager,\\
 & venture & wildlife & teur, & selector, & rancher, &  & barrister, & leader, & aspirant, & smuggler, & vice,\\
 & capitalist, & biologist, & plumber, & handicap- & appraiser &  & chap & bailiff, & taxi & inter- & housewife\\
 & psycho- & secretary & fire- & per &  &  &  & recruiter & driver & preter & \\
 & therapist & treasurer & fighter &  &  &  &  &  &  &  & \\
\hline
 & synagogues, & log cabin, & pizzeria, & pubs, & fair- & rink, & disused, &  & locality, &  & prefecture,\\
 & skyscraper, & zoning & borough, & racecourse, & grounds, & cottage, & derelict, &  & slum, &  & guesthouse,\\
 & studio & ordinance, & firehouse & western & acre tract, & chalet & leisure &  & hostel &  & metropolis\\
 &  & barn &  & suburbs & concession &  &  &  &  &  & \\
 &  &  &  &  & stand &  &  &  &  &  & \\
\hline
 & authors, & crafters, & mobsters, & gardai, & sheriffs, & skaters, & blokes, &  & mediaper- &  & migrant\\
 & hedgefund & hobbyists, & restaura- & lads, & folks, & premiers, & household- &  & sons, &  & workers,\\
 & managers, & racers & teurs, & foot- & appraisers & mushers & ers, &  & newsmen, &  & maids,\\
 & creators &  & captains & ballers &  &  & solicitors &  & office &  & civil\\
 &  &  &  &  &  &  &  &  & bearers &  & servants\\
\hline
 & rabbis, &  & papal, & archdio- & denomina- &  & vicar, & pulpit, & fatwa, & rosary, & commune,\\
 & synagogue, &  & pontiff, & cese, & tion, &  & creation- & preaching, & fasting, & parish & monks,\\
 & biblical &  & convent & clerical, & pastor, &  & ism, & preach & sufferings & priest, & temples\\
 &  &  &  & diocese & church &  & tradition- &  &  & patron & \\
 &  &  &  &  &  &  & alists &  &  & saint & \\
\hline
 & shekels, & mill levy, &  &  & millage, &  & unfair &  & rupees, & pesos, & baht,\\
 & settle- & assessed &  &  & payday &  & dismissal, &  & lakhs, & remit- & overseas,\\
 & ments, & valuation, &  &  & lenders, &  & atten- &  & dirhams & tances, & income\\
 & nonprofit & tax &  &  & appropria- &  & dances, &  &  & indigent & earners\\
 &  & abatement &  &  & tions &  & takings &  &  &  & \\
\hline
 & pollster, &  &  &  & commis- & ridings, &  & desegrega- & panchayat, & barangay, & plenary\\
 & liberal, &  &  &  & sioners, & selectmen, &  & tion, & candida- & immigra- & session,\\
 & moderates &  &  &  & countywide, & byelection &  & uncommit- & ture, & tion & landslide,\\
 &  &  &  &  & statewide &  &  & ted, & localities & reform, & multira-\\
 &  &  &  &  &  &  &  & voter &  & congress- & cial\\
 &  &  &  &  &  &  &  & registra- &  & woman & \\
 &  &  &  &  &  &  &  & tion &  &  & \\
\hline
 & insider & felonious &  &  & sheriff, & impaired & affray, & aggravated & absconding, & illegal & \\
 & trading, & assault, &  &  & meth lab, & driving, & bailiffs, & robbery, & charge- & immigrant, & \\
 & attorneys, & drug &  &  & jailers & criminal & aggravated & racially & sheet, & drug & \\
 & lawsuit & parapher- &  &  &  & negligence, & burglary & charged, & com- & traffick- & \\
 &  & nalia, &  &  &  & peniten- &  & probation & plainant & ing, & \\
 &  & criminal &  &  &  & tiary &  & violation &  & deadly & \\
 &  & mischief &  &  &  &  &  &  &  & weapon & \\
\hline
 &  &  &  &  &  & loonie, & sharemar- &  & load & peso, & cross\\
 &  &  &  &  &  & francs, & ket, &  & shedding, & reais, & strait,\\
 &  &  &  &  &  & takeovers & credit &  & microfi- & national- & yuan,\\
 &  &  &  &  &  &  & crunch, &  & nance, & ization & ringgit\\
 &  &  &  &  &  &  & gilts &  & rupee &  & \\
\hline
walleye, & transat- &  &  &  & crappie, &  &  & shad, & mangroves, & sardines, & mainland,\\
lakes, & lantic, &  &  &  & bass &  &  & barrier & jetty, & tuna, & seaweed,\\
aquarium & iceberg, &  &  &  & fishing, &  &  & islands, & kite & archipel- & island\\
 & flotilla &  &  &  & boat ramp &  &  & grouper &  & ago & \\
\hline
feedlot, &  & cornfield, &  &  & mowing, &  &  &  & agro, & farmwork- & bamboo,\\
barley, &  & pumpkins, &  &  & deer &  &  &  & saplings, & ers, & cassava,\\
wheat &  & alfalfa &  &  & hunting, &  &  &  & livelihood & coca, & palm oil\\
 &  &  &  &  & pasture &  &  &  &  & sugarcane & \\
\hline
\end{tabular*}

}
\cprotect\caption{The top-12 WEATs output by our UBE algorithm on the \verb|w2v| embedding for {\em last names}. The corresponding name groups are presented in Table \ref{table:names_other_embeddings}.
}\label{table:w2vlast}
\end{table*}








\end{document}